\newtheorem{mydef}{Definition}
\newtheorem{thm}{Theorem}
\newtheorem{claim}{Claim}
\newtheorem{rmk}{Remark}
\newtheorem{lemma}{Lemma}
\newtheorem{prop}{Proposition}
\begin{document}
\title{ Procrustes Wasserstein Metric: A Modified Benamou-Brenier Approach  with Applications to Latent Gaussian Distributions }

\author{Kevine Meugang Toukam\footnote{Universität Leipzig, Fakultät für Mathematik und Informatik, Augustusplatz 10, 04109 Leipzig,
Germany and Max Planck Institute for Mathematics in the Sciences, 04103 Leipzig, Germany}}
    
\date{\today}
\maketitle
\begin{abstract}
In this work, we introduce a modified Benamou-Brenier type approach leading to a Wasserstein type distance that allows global invariance, specifically, isometries, and we show that the problem can be summarized to orthogonal transformations. This distance is defined by penalizing the action with a costless movement of the particle that does not change the direction and speed of its trajectory. We show that for Gaussian distribution  resume to measuring the Euclidean distance between their ordered vector of eigenvalues and we show a direct application in recovering Latent Gaussian distributions.
\end{abstract}


\section{Introduction}

The study of data similarity is a cornerstone of modern data science, with applications ranging from network analysis and computer vision to bioinformatics \cite{yan2016short}. A meaningful notion of similarity must not only capture structural and functional relationships but also account for inherent symmetries and transformations in the data. Optimal Transport (OT) has emerged as a powerful framework for comparing data distributions, offering a principled way to align structures while preserving essential geometric and statistical properties. Recent advances, such as the Gromov-Wasserstein (GW) framework \cite{memoli2011gromov}, have extended OT to settings where data lie in incomparable spaces, enabling comparisons based on intrinsic relational structures and providing a natural way to handle invariance to isometries. While GW focuses on relational invariance, our work addresses invariance to explicit geometric transformations such as rotations and translations in Euclidean spaces, offering a complementary perspective.

A critical challenge in OT-based data comparison is the notion of invariance. Real-world data often exhibit symmetries such as permutations, rotations, or translations that should not affect their similarity measure. Existing approaches \cite{fiori2015spectral, petric2019got} have addressed invariance by incorporating permutation constraints, where alignments are represented by permutation matrices. While elegant, these methods often rely on stochastic optimization algorithms, such as stochastic gradient descent, to approximate solutions. Although effective in practice, such approaches can converge to suboptimal solutions due to the complexity of the optimization landscape.

In this article, we adopt a novel approach by focusing on the \textit{action of the Euclidean group} to illustrate geometric invariance in $\mathbb{R}^d$. This perspective generalizes the notion of invariance beyond permutations, incorporating rotations and translations to capture a broader class of symmetries. Although the work of \cite{alvarez2019towards} nicely incorporates those invariance in the Wasserstein metric and even provides a nice algorithm to numerically address this problem, there is no information about a possible explicit solution for a more tractable situation, namely the Gaussian case. Using this group action, we simplify the formulation of the OT problem while maintaining key invariance properties, offering a path to exact solutions. Our contributions are summarized as follows:

\begin{itemize}
    \item[-] \textbf{Dynamic Formulation}: We propose a novel formulation of OT under the action of the Euclidean group in $\mathbb{R}^d$ from a fluid dynamics perspective, inspired by the Benamou-Brenier framework.
    \item [-] \textbf{Static Formulation}: We derive a static formulation of the problem, analogous to the Kantorovich formulation in classical OT, which we call the \textit{Procrustes-Wasserstein distance}.
    \item [-] \textbf{Explicit Solution for Gaussian Distributions}: We provide an explicit expression for the Procrustes-Wasserstein distance between Gaussian distributions, offering key theoretical insights.
    \item [-] \textbf{Application to data recovery}: We address the problem of recovering latent Gaussian distributions from observed data transformed by an unknown orthogonal matrix. Using the Procrustes-Wasserstein distance, we derive an estimator for the equivalence class of the latent distribution, accounting for orthogonal invariance. Our method efficiently estimates the covariance structure in an asymptotically unbiased manner, enabling recovery of the latent distribution up to an orthogonal transformation as the sample size grows.
\end{itemize}
This work bridges theoretical advancements in OT with practical applications in data recovery, demonstrating the power and versatility of the Procrustes-Wasserstein distance in capturing meaningful data similarities.
\section{Reminder: Wasserstein distances, duality and Benamou-Brenier distance}\label{section2}
\subsection{The Monge formulation of the Optimal Transport problem}
Monge's formulation of the optimal transport problem \cite{monge1781memoire} is intuitive and straightforward. Given two probability measures, $\mu_0$ and $\mu_1$, on two distinct metric spaces, $X$ and $Y$ respectively, the goal is to find a transport map $T: X \rightarrow Y$ that pushes the measure $\mu_0$ forward to the measure $\mu_1$, minimizing the total transportation cost.

\textbf{Mathematical Formulation}:
Let $X$ and $Y$ be two Polish spaces, and $\mu_0$ and $\mu_1$ be probability measures defined on them. The optimal transport problem can be mathematically formulated as finding a measurable map $T: X \rightarrow Y$ such that $T_\#\mu_0 = \mu_1$, and minimizing the cost function $C(T)$ given by:

\[ C(T) = \int_X c(x,T(x)) d\mu_0(x),\]
where $c(x, y)$ is the cost of transporting a unit of mass from location $x$ in $X$ to location $y$ in $Y$.

    While Monge's formulation is elegant and conceptually clear, it is challenging to find optimal solutions, especially when dealing with complex measures and high-dimensional spaces. One significant limitation of Monge's formulation is that it requires the existence of a well-defined transport map $T(x)$ for every $x \in X$. However, in many practical scenarios, such a transport map may not exist. 
\subsection{Original Kantorovich's problem}
Let us start with the mathematical formulation. Given two Polish spaces $X$ and $Y$, and a cost function $c : X \times Y \rightarrow [0, +\infty]$. For simplicity, we assume that $c$ is continuous and symmetric. The goal is to find a solution to the following problem:

\[
\min_{\gamma \in \Pi(\mu_0, \mu_1)} \int_{X \times Y} c\, d\gamma \label{KP}\tag{KP}
\]
where $\mu_0$ and $\mu_1$ are probability measures on $X$, and $\Pi(\mu_0, \mu_1)$ is the set of transport plans, defined as:

\[
\Pi(\mu_0, \mu_1) = \{\gamma \in P(X \times Y) : (\pi_0)_\#\gamma = \mu_0, (\pi_1)_\#\gamma = \mu_1\} \tag{2}
\]

Here, $(\pi_0)_\#\gamma$ and $(\pi_1)_\#\gamma$ denote the marginal distributions of $\gamma$ with respect to the first and second components of $X \times Y$, respectively.

In the traditional Monge formulation, one seeks a transport map $T : X \rightarrow Y$ that assigns a unique destination $T(x)$ for each point $x$. However, in Kantorovich's formulation\cite{kantorovich1942translocation}, the focus is on transport plans $\gamma$, which describe the movement of particles from one point to possibly multiple targets, allowing for more general and flexible transport patterns.

The solutions to \ref{KP} are referred to as optimal transport plans between $\mu_0$ and $\mu_1$.
\begin{rmk}
    If a transport plan $\gamma$ can be represented as $(\text{id}, T)_\#\mu_0$ for a measurable map $T : X \rightarrow X$, then $T$ is called an optimal transport map from $\mu_0$ to $\mu_1$. This plus the fact that $ \int_{X \times Y} c(x,y)\, d\gamma = \int_{X} c(x,T(x))\, d\mu_0 $ allow us to see in some sense the Kantorovich formulation as a generalization of the Monge's problem.
\end{rmk}
Then, the generalization introduced by Kantorovich makes the problem more tractable compared to the original Monge formulation. Instead of searching for a unique transport map, the focus is now on finding optimal transport plans, which always exist for Kantorovich's problem.
\subsection{Dual Problem}

The dual formulation of the Kantorovich problem involves finding dual variables $\phi$ and $\psi$ associated with the measures $\mu_0$ and $\mu_1$, respectively. The dual problem (DP) is given by:
\[
\text{(DP)} \quad \sup \left\{ \int \phi(x) \, d\mu_0 + \int \psi(y) \, d\mu_1 : \phi(x) + \psi(y) \leq c(x, y) \text{ for all } (x, y) \in X \times X \right\}
\]

Here, $\phi$ and $\psi$ are real-valued functions on $X$. Any pair of maximizers (if they exist) are called \textit{Kantorovich potentials}.
These functions play a crucial role in the optimal transport problem and are related to the transport cost and the existence of optimal transport plans.

The duality result between (KP) and (DP) is given by the Kantorovich-Rubinstein duality theorem \cite{santambrogio2015optimal}:

\[
\inf \text{(KP)} = \sup \text{(DP)}.
\]
\subsection{Wasserstein space}
Starting from the problem (KP) described above with $X=Y$ the given Polish space, we can define a new set of distances within the realm of probability measures, denoted as $P(X)$. Our primary focus revolves around costs represented by the function $c(x, y) = |x - y|^p$, designed for the metric space $ \Omega \subset X = \mathbb{R}^d$, with 
$\Omega$ convex. This approach can be extended to more general metric spaces by incorporating the concept of distance raised to the power of $p$, where $p$ falls within the range of $[1, +\infty)$.

In scenarios where $\Omega$ is unbounded, the analysis is constrained to a specific probability set defined as follows:
\[ P_p(\Omega) := \{ \mu \in P(\Omega): \int |x|^p \, d\mu(x) < +\infty \}. \]

When working on a general metric space $X$, an arbitrary point $x_0 \in X$ is chosen as a reference, leading to the following definition:
\[ P_p(X) := \{ \mu \in P(X): \int d(x, x_0)^p \, d\mu(x) < +\infty \}. \]
Importantly, the finiteness of this integral (and therefore the definition of $P_p(X)$) is independent of the  chosen $x_0$.

Our primary focus centers on distances defined by the formula for $p=2$:
\begin{equation}\label{continuoustransportmetric}
     (W_p(\mu_0, \mu_1))^p = \inf \{ \int_{X \times X} |x - y|^p   d\pi(x,y) ~ , {\pi \in \Gamma(\mu_0, \mu_1)} \}.
\end{equation}
These functions are referred to as Wasserstein distances for every $p$ \cite{villani2009optimal}.
\subsection{Wasserstein between Gaussian distributions}
Finding optimal transport plans between probability distributions is often challenging. However, in certain scenarios, explicit solutions are available. For instance, in one dimension (where $n = 1$), if the cost function $c$ is convex and based on the Euclidean distance along the line, the optimal plan involves a monotonic rearrangement of the distribution $\mu_0$ into $\mu_1$. This means that mass is transported in a monotonic manner from left to right. (Refer to Chapter 2, Section 2.2 of \cite{villani2021topics} for detailed explanations.) Another situation where a solution is known, particularly for a quadratic cost, is in the Gaussian case, applicable in any dimension $n \geq 1$. Here, the cost function is quadratic, and an explicit solution exists \cite{dowson1982frechet, takatsu2010wasserstein}.

Given $\mu_i = \mathcal{N}(m_i, \Sigma_i)$, $i \in \{0, 1\}$ two Gaussian distributions on $\mathbb{R}^d$, the 2-Wasserstein distance $W_2$ between $\mu_0$ and $\mu_1$ has a closed-form expression, well known as the Burg or Frechet metric ( section 1 of \cite{dowson1982frechet})  which can be written as:
\begin{equation}\label{burgmetric}
W_2^2(\mu_0, \mu_1) = \|m_0 - m_1\|^2 + \mathrm{tr}\left(\Sigma_0 + \Sigma_1 - 2\left(\Sigma_1^{1/2}\Sigma_0\Sigma_1^{1/2}\right)^{1/2}\right),
\end{equation}
where, for every symmetric semi-definite positive matrix $M$, the matrix $M^{1/2}$ is its unique semi-definite positive square root.
\subsection{Benamou-Brenier Formulation}

Benamou and Brenier \cite{benamou2000computational} introduced an alternative numerical framework for the optimal mass transfer problem, linking (KP)  to continuum mechanics. They study the dynamical problem from measure $\mu_0$ at $t = 0$ to $\mu_1$ at $t = 1$. In the setting $X = Y = \mathbb{R}^d$ with squared Euclidean cost $c(x, y) = \|x - y\|^2$, solving the problem coincides with finding the minimal path $(\mu_t)_{t=0}^1$, minimizing total length.

The path $\mu_t$ is described through a time-varying vector field $v(t, \cdot)$ satisfying the continuity equation $\frac{\partial \mu_t}{\partial t} + \nabla \cdot (\mu_t V) = 0$, $\mu_0 = \mu_0$, $\mu_1 = \mu_1$. This vector field $V(t, \cdot)$ represents the speed, and $\mu_t v(t, \cdot)$ corresponds to momentum.

Reformulating the optimal transportation problem in a differential way, inspired by fluid mechanics, is crucial for studying dynamical problems. Each curve $\mu_t$ represents the measure's evolution over time, interpreted as fluid flow along a family of vector fields.

We search for the vector field $V(t, \cdot)$ satisfying  conservation of mass and  minimizing kinetic energy. The infinitesimal length of such a vector field can be computed as \\ $\|V\|^2_{L^2(\mu_t)} = \left(\int_{\mathbb{R}^d} \|V(t, x)\|^2 d\mu_t(x)\right)^{1/2}$. 

This results in the minimal-path reformulation of the problem:
\begin{equation}\label{BB metric}
(W_2(\mu_0, \mu_1))^2 = \inf_{(\mu_t, V)} \int_0^1 \int_{\mathbb{R}^d} \frac{1}{2} \|V(t, x)\|^2 d\mu_t(x) dt
\end{equation}
subject to $\frac{\partial \mu_t}{\partial t} + \nabla \cdot (V\mu_t) = 0$, $\mu_0 = \mu_0$, $\mu_1 = \mu_1$.

The path $\mu_t$ describes the time-evolving density of particles moving continuously with velocity $V(t, \cdot)$.

\section{A Modified Benamou-Brenier Distance}\label{section3}
From a fluid dynamics perspective, this kinetic energy represents the effort needed to move particles according to the vector field $V$. The Benamou-Brenier formulation selects the vector field $V$ minimizing the total efforts or kinetic energy required for particle movement. By looking at the minimal energy required for all geometric structures preserved during the particle movement according to the vector field $V$, we propose a novel formulation of the Benamou-Brenier distance that allows for global geometric invariance to be incorporated into the objective function.

\textbf{Mathematical Formulation}:
\noindent Given $\mu_0$ and $\mu_1$ be probability measures defined on  $\mathbb{R}^d$. The Modified Benamou-Brenier problem (MBB) can be mathematically formulated as :
\begin{equation}\label{continuousMBB}
   \bar{d}^2(\mu_0,\mu_1)= \inf \left\{ \int_{0}^{1} \int_{\mathbb{R}^{d}}\frac{1}{2} \| V_{t} + L_{E_t \theta_t } - C_{t} \|^{2} \, \mu_{t} \, dx \, dt  \ | (u_t,V_t,C_t, E_t, \theta_t) \in A \right\}
\end{equation}
where, $A= \{(u,V,C, E, \theta) \ | \ \forall t \in [0,1], \  \partial_t \mu_t + \text{div} (u_t \cdot V_t) = 0, \ C_t \in \mathbb{R}^d, \ E_t \in \text{Skew}(d), \ \dot{\theta}_t = E_t \theta_t, \ 
\theta_{t=0} = I_d,  \ \mu_{t=0,1}=\mu_{0,1}, \ t\in [0,1] \}$, where, 
\begin{eqnarray*}
    \text{Skew}(d)=  \{ \Omega \in \mathbb{R}^{d \times d} \ | \ \Omega^\top = -\Omega\} ,
\end{eqnarray*}
  
and $\theta_t $ satisfies the evolution equation:
\begin{eqnarray}\label{evolutionequationoforthogonalcurve}
\begin{cases}
\dot{\theta}_t = E_t \theta_t, \\
\theta_0 = I_d,
\end{cases}
\end{eqnarray}
where \( I_d \) is the \( d \times d \) identity matrix.

\begin{prop}
    Given $\mu_0$,  $\mu_1$ $\in  P_2(\mathbb{R}^d)$, the Modified Benamou-Brenier problem \eqref{continuousMBB} admits a static formulation Wasserstein like given as follow;
    \begin{equation} \label{staticMBB}
        \bar{d}^2(\mu_0,\mu_1)= \inf_{\theta \in O(d)} \left\{ ( W_2(\theta_{\#} \bar{\mu_0}, \bar{\mu_1}))^2\right\}.
    \end{equation}
    where $\bar{\mu_{i}}, \ i\in \{0,1\} $ represent the centered measure $\mu_i, \ \forall i \in \{0,1\}$. This will be called the Procrustes Wasserstein metric.
\end{prop}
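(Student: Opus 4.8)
The plan is to pass to a co-moving (rotating and translating) reference frame, in which the modified cost in \eqref{continuousMBB} becomes an ordinary kinetic energy, and then to invoke the Benamou--Brenier theorem \eqref{BB metric} together with the orthogonal invariance of $W_2$. The whole argument is organized as a cost- and constraint-preserving bijection between admissible tuples $(\mu_t,V_t,C_t,E_t,\theta_t)$ and pairs ``rigid motion $+$ classical Benamou--Brenier path''.

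First I would fix an admissible tuple and build the rigid motion. Let $b_t$ solve $\dot b_t=C_t$ with $b_0=0$, and set $g_t(\xi)=\theta_t\xi+b_t$, so that $g_0=\mathrm{id}$; by construction the Eulerian velocity field of the flow $g_t$ at a point $x$ is $\dot g_t(g_t^{-1}(x))=C_t+E_t(x-b_t)$, the spatially affine rigid field combining the translation $C_t$ with the infinitesimal rotation generated by $E_t$, which I identify with $x\mapsto C_t-L_{E_t\theta_t}(x)$. Setting $\nu_t:=(g_t^{-1})_\#\mu_t$ and computing $\dot\xi$ for a particle with $\dot x=V_t(x)$ gives $\dot\xi=\theta_t^{-1}\bigl(V_t(x)-C_t+L_{E_t\theta_t}(x)\bigr)$, so $\nu_t$ solves the continuity equation with body-frame field $\tilde V_t:=\theta_t^{-1}\bigl(V_t-C_t+L_{E_t\theta_t}\bigr)\circ g_t$. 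Because $\theta_t\in SO(d)$ is an isometry, the change of variables $x=g_t(\xi)$ produces the pointwise-in-time identity $\int \tfrac12\|\tilde V_t\|^2\nu_t\,d\xi=\int \tfrac12\|V_t+L_{E_t\theta_t}-C_t\|^2\mu_t\,dx$, so the two energies coincide. This correspondence is a bijection once $(\theta_t,b_t)$ is frozen, and it transports the boundary data to $\nu_0=\mu_0$ and $\nu_1=(g_1^{-1})_\#\mu_1$.

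Next, with the endpoint transformation $(\theta_1,b_1)$ held fixed, the energy depends on $(\mu_t,V_t)$ only through $(\nu_t,\tilde V_t)$, so the inner infimum over paths is exactly the classical Benamou--Brenier problem between $\mu_0$ and $(g_1^{-1})_\#\mu_1$ and equals $W_2^2\bigl(\mu_0,(g_1^{-1})_\#\mu_1\bigr)$ by \eqref{BB metric}. It remains to optimize over the reachable endpoints. For the translation I would use the decomposition $W_2^2(\alpha,\beta)=\|m_\alpha-m_\beta\|^2+W_2^2(\bar\alpha,\bar\beta)$, valid because splitting each point into its mean and centered part makes the cross term vanish in every coupling; choosing $b_1$ to align the means kills the first term and replaces the measures by their centered versions $\bar\mu_0,\bar\mu_1$. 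For the rotation, centering commutes with the linear map $\theta_1^{-1}$ and $W_2$ is invariant under applying one orthogonal map to both arguments, so $W_2^2\bigl(\bar\mu_0,(\theta_1^{-1})_\#\bar\mu_1\bigr)=W_2^2\bigl((\theta_1)_\#\bar\mu_0,\bar\mu_1\bigr)$; taking the infimum over $\theta_1$ yields exactly \eqref{staticMBB}.

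The step I expect to be the main obstacle is making the frame change rigorous: I must establish a time-dependent change-of-variables lemma showing that pushforward by the invertible flow $g_t$ sends distributional solutions of the continuity equation to solutions with the transformed velocity $\tilde V_t$, and that the induced map on tuples is a genuine bijection preserving both the constraint set $A$ and the cost, so that the nested infimum legitimately decouples into ``choose the endpoint frame, then run Benamou--Brenier''. A secondary but genuine subtlety is the reachable set of rotations: the flow of $\dot\theta_t=E_t\theta_t$ from $\theta_0=I_d$ with $E_t\in\mathrm{Skew}(d)$ only sweeps the identity component $SO(d)$, so the reduction most naturally produces $\inf_{\theta\in SO(d)}$. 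Upgrading this to $\inf_{\theta\in O(d)}$ as stated requires either an argument that reflections cannot strictly lower the Wasserstein cost here or an enlargement of the admissible generators, and I would flag this explicitly rather than gloss over it.
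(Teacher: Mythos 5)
Your proposal is correct in substance and arrives at the same static cost $\tfrac12\|\theta_1 x+b_1-y\|^2$ as the paper, but by a genuinely different route. The paper argues by Lagrangian duality: it introduces a multiplier $\lambda$ for the continuity equation, completes the square in $V$, obtains a time-dependent Hamilton--Jacobi equation for $\lambda$, solves it with a Hopf--Lax-type formula for layered viscosity solutions, recognizes the resulting constraint $\lambda_1(x)-\lambda_0(y)\le\tfrac12(\theta x-C-y)^2$ as the dual of a Kantorovich problem, and finally optimizes over $C$ by identifying $C^*$ with the difference of means. Your co-moving-frame argument instead treats the classical Benamou--Brenier theorem \eqref{BB metric} as a black box: the change of variables by the rigid flow $g_t$ turns the penalized energy into an ordinary kinetic energy, the inner infimum collapses to $W_2^2\bigl(\mu_0,(g_1^{-1})_\#\mu_1\bigr)$, and the mean-splitting identity plus orthogonal invariance of $W_2$ finish the reduction. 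This is shorter, makes the geometric content (quotienting by rigid motions) transparent, and avoids a delicate step in the paper's route, where the Hopf--Lax representation used is the one valid for Hamiltonians of the form $H(t,\nabla\lambda)$ even though the Hamiltonian here carries the $x$-dependent term $\nabla\lambda\cdot(-L_{E_t\theta_t}(x)+C_t)$. Two caveats are worth recording. First, the paper never defines $L_{E_t\theta_t}$; its own computation requires $L_M(x)=Mx$ (so that $\int_0^1 L_{E_s\theta_s}\,ds=L_{\theta_1-I}$), under which the ``free'' field $C_t-E_t\theta_t x$ is \emph{not} the Eulerian velocity $C_t+E_t(x-b_t)$ of your flow $g_t$; your identification is therefore a reinterpretation of the notation rather than a literal reading, though it is the natural one and, reassuringly, both readings yield the same endpoint cost and hence the same static formulation \eqref{staticMBB}. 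Second, your flag about $SO(d)$ versus $O(d)$ is a genuine issue that the paper passes over silently: the reachable set of \eqref{evolutionequationoforthogonalcurve} with skew-symmetric generators is exactly $SO(d)$, so the dynamic problem literally produces $\inf_{\theta\in SO(d)}$; the two infima agree in the Gaussian application (one may flip the sign of a column of $P_0$ without changing $\Sigma_0$), but equating them with $\inf_{\theta\in O(d)}$ in general would need an additional argument that neither you nor the paper supplies.
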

\begin{proof}
    The proof requires knowledge from optimal control, duality of the Hamilton Jacobi equation,and duality of the Kantorovich formulation of Optimal transport. We start by writing the Lagrangian of the problem, we have:
    \begin{equation}\label{lagrangian}
        L(\mu, V, C, E,  \theta, \lambda)= \int_{0}^{1} \int_{\mathbb{R}^{d}} \frac{1}{2} \| V_{t} + L_{E_t  \theta_t} - C_{t} \|^{2} \mu_t - \lambda(\partial_t \mu_t + \text{div} (u_t \cdot V_t)) \, \, dx \, dt .
    \end{equation}
   By integration by parts, \eqref{lagrangian} becomes,
   \begin{align*}
     L(\mu, V, C, E,  \theta, \lambda) &=  \int_{0}^{1} \int_{\mathbb{R}^{d}} (\frac{1}{2} \| V_{t} + L_{E_t  \theta_t} - C_{t} \|^{2}- \partial_t \lambda - \nabla \lambda \cdot V_t) \, \mu_{t} \, dx \, dt\\
     &+  \int_{\mathbb{R}^{d}} \lambda(1,x)\mu_1(x) - \lambda(0,x) \mu_0(x) \, dx,\\
     &= \int_{\mathbb{R}^{d}} (\frac{1}{2} \| V_{t} + L_{E_t  \theta_t} - C_{t} \|^{2}- \partial_t \lambda - \nabla \lambda \cdot V_t) \, \mu_{t} \, dx \, dt + \underbrace{\langle \lambda_1, \mu_1 \rangle - \langle \lambda_0, \mu_0 \rangle}_{B},\\
     &= \int_{\mathbb{R}^{d}} (\frac{1}{2} \| V_{t} + L_{E_t  \theta_t} - C_{t}- \nabla \lambda \|^{2}- \partial_t \lambda -\frac{1}{2}\|\nabla \lambda\|^2- \nabla \lambda \cdot (-L_{E_t  \theta_t}+ C_t)) \, \mu_{t} \, dx \, dt + B.
   \end{align*}
  Using the Lagrangian method, problem \eqref{continuousMBB}  is equivalent to 
  \begin{equation} \label{minlagrangian}
     \inf_{u,V,C, E, \theta}  \sup_{\lambda} L(\mu, V, C, E, \theta,  \lambda)
  \end{equation}
   It follows from the fact that $L$ is quadratic in $V$ that the optimal value of $V$ namely $V^*=  C_{t}- L_{E_t  \theta_t} + \nabla \lambda  $ and we have
  \begin{equation*}
       L(\mu, V^*, C, E,  \theta, \lambda) = \int_{\mathbb{R}^{d}} (-\partial_t \lambda -\frac{1}{2}\|\nabla \lambda\|^2- \nabla \lambda \cdot (-L_{E_t  \theta_t}+ C_t)) \, \mu_{t} \, dx \, dt + B.
  \end{equation*}
  By duality of the MinMax principle, we have 
  \begin{equation*}
      \eqref{minlagrangian} \iff \inf_{\theta_t} \sup_\lambda  \inf_{C_t,E_t} \left\{  \langle \lambda_1, \mu_1 \rangle - \langle \lambda_0, \mu_0 \rangle \ | \ {\partial_t \lambda + \frac{1}{2}\|\nabla \lambda\|^2+ \nabla \lambda \cdot (-L_{E_t  \theta_t}+ C_t)= 0}  \right\}.
  \end{equation*}
  In order to write explicitly \eqref{minlagrangian}, we have to try to solve this time depending Hamilton Jacobi equation or simply, computes the value of the Lagrange multiplier $\lambda_1$ at time $t=1$.

  Given a general time dependent  HJB equation, that is 
  \begin{equation}\label{timedependingHJB}
      \begin{cases}
      \partial_t \lambda + H (t,x,\nabla \lambda),\\
      \lambda(0,\cdot)= \lambda_0,
      \end{cases}
  \end{equation}
  in our case, $H (t,x,\nabla \lambda)= \frac{1}{2}\|\nabla \lambda\|^2+ \nabla \lambda \cdot (-L_{E_t  \theta_t}+ C_t) $, it follows from \cite{nguyen2014layered} (Thm 3.1) that the solution of \eqref{timedependingHJB} is given by
  \begin{equation*}
      \lambda(t,x)= \sup_{q\in \mathbb{R}^d} \left \{ x\cdot q- \lambda_0^*(q) - \int_{0}^{t} H(s,q) ds \right\},
  \end{equation*}
  where $\lambda_0^* $ is the Fenchel conjugate of the initial condition of \eqref{timedependingHJB} given by $\lambda_0^* = \sup_{p\in \mathbb{R}^d}  \{p\cdot q - \lambda_0(p) \}$. According to \cite{nguyen2014layered} these solutions are called Layered viscosity solutions.

  Let us compute $\lambda_1(x)$ using the expression of our time depending HJB equation $H (s ,q)= \frac{1}{2}\|q\|^2+ q \cdot (-L_{E_s  \theta_s}+ C_s) $. 

  For every $x\in \mathbb{R}^n$, we have 
  \begin{align*}
      \lambda(1,x) &= \sup_{q\in \mathbb{R}^d} \left \{ x\cdot q- \lambda_0^*(q) - \int_{0}^{1} \frac{1}{2}\|q\|^2+ q \cdot (-L_{E_s  \theta_s}(x)+ C_s) ds \right\},\\
      &=\sup_{q\in \mathbb{R}^d} \left \{ x\cdot q- \lambda_0^*(q) -  \frac{1}{2}\|q\|^2- q \cdot (-L_{\bar{E  \theta} }(x)+ C )  \right\} \ \text{with} \ \int_{0}^{1} C_s ds = C, \ \int_{0}^{1} L_{E_s  \theta_s } ds = L_{\bar{E  \theta}  } ,\\
      &=\sup_{q\in \mathbb{R}^d} \left \{ x\cdot q +\inf_{p\in \mathbb{R}^n}  \{ \lambda_0(p) -p\cdot q \} -  \frac{1}{2}\|q\|^2- q \cdot (-L_{\bar{E  \theta}  }(x)+ C )  \right\} \\
      &= \inf_{p\in \mathbb{R}^d} \sup_{q\in \mathbb{R}^d} \left \{  \lambda_0(p) -  \frac{1}{2}\|q\|^2+ q \cdot (x+L_{\bar{E  \theta} }(x)- C  -p )  \right\} ,\\
      &= \inf_{p\in \mathbb{R}^d} \sup_{q\in \mathbb{R}^d} \left \{  \lambda_0(p) -  \frac{1}{2}(q-(x+ {\bar{E  \theta}  }x- C  -p ))^2+ \frac{1}{2} (x- {E }x- C  -p )^2  \right\},\\
      &= \inf_{p\in \mathbb{R}^d}  \left \{  \lambda_0(p) + \frac{1}{2} (x+ {\bar{E   \theta}  }x- C  -p )^2  \right\}.
  \end{align*}
  It is worth to notice that $\bar{E \theta}= \theta(1)-\theta(0)$, where $\theta $ is the solution of the ODE \eqref{evolutionequationoforthogonalcurve} well known as the ordered exponential map define by $\theta(t)=  \mathcal{T} \exp\left( \int_0^t E_s \, ds \right)$,
where $\mathcal{T}$ (the time-ordering operator) ensures that operators are multiplied in the correct chronological order. 
\begin{lemma}\label{orderedexponential map}
Given $\theta_t$ the ordered exponential map solution of the ODE \eqref{evolutionequationoforthogonalcurve}, 
     $\forall t\in [0,1]$, $\theta_t \in O(d)$ if and only if $E_t$ is a skew-symmetric matrix $\forall \ t $.
\end{lemma}
\begin{proof}

\begin{itemize}
    \item Let's assume  \( E_t^T = -E_t \), and show that \( \theta_t \) is orthogonal i.e., \( \theta_t^T \theta_t = I \).

The ordered exponential map is defined as:
\[
\theta_t = \mathcal{T} \exp\left( \int_0^t E_s \, ds \right),
\]
which satisfies the matrix differential equation:
\[
\frac{d\theta_t}{dt} = E_t \theta_t, \quad \theta_0 = I.
\]
 Differentiating \( \theta_t^T \theta_t \) with respect to \( t \), we get:
\[
\frac{d}{dt} \left( \theta_t^T \theta_t \right) = \frac{d\theta_t^T}{dt} \theta_t + \theta_t^T \frac{d\theta_t}{dt}.
\]
Substituting \( \frac{d\theta_t}{dt} = E_t \theta_t \) and his transpose \( \frac{d\theta_t^T}{dt} = \theta_t^T E_t^T \) into the derivative of \( \theta_t^T \theta_t \), we obtain:
\[
\frac{d}{dt} \left( \theta_t^T \theta_t \right) = \theta_t^T E_t^T \theta_t + \theta_t^T E_t \theta_t.
\]

 Using the skew-symmetry of \( E_t \), $E_t^T + E_t = 0$, leads us to 
\[
\theta_t^T E_t^T \theta_t + \theta_t^T E_t \theta_t = \theta_t^T \left( E_t^T + E_t \right) \theta_t =  0.
\]
Therefore:
\[
\frac{d}{dt} \left( \theta_t^T \theta_t \right) = 0.
\]
This implies \( \theta_t^T \theta_t \) is constant in time. Since \( \theta_0 = I \), we have:
\[
\theta_t^T \theta_t = I \quad \text{for all } t.
\]

Thus, \( \theta_t \) is orthogonal.

\item Let us suppose that \( \theta_t \) is orthogonal for all \( t \) and show that \( E_t^T = -E_t \).

The orthogonality condition of \( \theta_t \) for all \( t \) means;
\[
\theta_t^T \theta_t = I \quad \text{for all } t.
\]
By differentiating both sides with respect to \( t \), we get:
\[
\frac{d}{dt} \left( \theta_t^T \theta_t \right) = \frac{d\theta_t^T}{dt} \theta_t + \theta_t^T \frac{d\theta_t}{dt} = 0.
\]

As before,  Substituting \( \frac{d\theta_t}{dt} = E_t \theta_t \) and his transpose
 into the derivative of \( \theta_t^T \theta_t \) and consecutively factorizing, we have:
\[
\theta_t^T E_t^T \theta_t + \theta_t^T E_t \theta_t = \theta_t^T \left( E_t^T + E_t \right) \theta_t = 0.
\]
Since \(  \theta_t \) is invertible as an orthogonal matrix, we can factor it out to:
\[
E_t^T + E_t = 0.
\]
This implies that \( E_t \) is skew-symmetric.
\end{itemize}
\end{proof}
  Using Lemma \ref{orderedexponential map}, and replacing the value of the ordered exponential map at 0 and 1 leads us to the following computations.
  \begin{align*}
      \inf_{\theta} \sup_{\lambda} \inf_{u,C} L(\mu, V, C, \theta, \lambda)&=  \inf_{C,\theta} \sup_{\lambda_0, \lambda_1} \left\{  \langle \lambda_1, \mu_1 \rangle - \langle \lambda_0, \mu_0 \rangle \ | \  \lambda_1(x)-\lambda_0(y)\leq \frac{1}{2} ({\theta }x- C  -y )^2    \right\},\\
       &= \inf_{C,\theta} \inf_{\Pi \in \Gamma(\mu_0,\mu_1)} \left\{ \int_{\mathbb{R}^d\times\mathbb{R}^d} \frac{1}{2} ( {\theta }x -y - C)^2 d\Pi(x,y)   \right\},\\
       &=\inf_{\theta \in O(d)} \inf_{\Pi \in \Gamma(\mu_0,\mu_1)}  { \inf_{C} \left\{ \int_{\mathbb{R}^d\times\mathbb{R}^d} \frac{1}{2} ( {\theta }x -y - C)^2 d\Pi(x,y)   \right\}}.
  \end{align*}
  The following claim provides the optimal value of C.
  \begin{claim}
      Given $\mu_0, \ \mu_1 $as above, for every $\theta \in O(d)$ and $\Pi \in \Gamma(\mu_0,\mu_1)$, the optimal value $C^*$ that minimizes $PbC$
    \begin{align*}
    PbC =\inf_{{C}}\int_{\mathbb{R}^d\times\mathbb{R}^d} \frac{1}{2}|\theta x-y-{C}|^2d\Pi(x,y),
\end{align*}
      is given by 
      \begin{equation*}
          C^*= \langle x\rangle_{\theta_\# \mu_0 }-\langle y\rangle_{\mu_1}.
      \end{equation*}
  \end{claim}

Let us define the function  $g:\mathbb{R}^d\times\mathbb{R}^d\to\mathbb{R}^d$ as $g(x,y):=\theta x-y$, then
\[
PbC=\inf_{{C}}\int_{\mathbb{R}^d\times\mathbb{R}^d}\frac{1}{2}|g(x,y)-{C}|^2d\Pi(x,y).
\]
From probability analysis, precisely the definition of variance, the optimal value of ${C}$ in the vector expectation of $g$ given the joint probability distribution $\Pi$.
\begin{align*}
{C^*}&=\langle g(\cdot,\cdot)\rangle_{\Pi},\\
&=\langle  \theta x-y\rangle_{\Pi},\\
&=\langle  \theta x\rangle_{\mu_0}-\langle y\rangle_{\mu_1}.
\end{align*}

Thus, our minimization problem \eqref{minlagrangian} becomes
\begin{align*}
     \sup_{\lambda} \inf_{u,V,C, \theta} L(\mu, V, C, \theta, \lambda)&=\inf_{\theta} \inf_{\Pi \in \Gamma(\mu_0,\mu_1)} \int_{\mathbb{R}^d\times\mathbb{R}^d} \frac{1}{2}|\theta x-y-{C^*}|^2d\Pi(x,y),\\
    &=\inf_{\theta} \inf_{\Pi \in \Gamma(\mu_0,\mu_1)} \int_{\mathbb{R}^d\times\mathbb{R}^d} \frac{1}{2}|\theta x-\langle x\rangle_{\theta_\# {\mu_0}}-(y-\langle y\rangle_{\mu_1})|^2d\Pi(x,y),\\
    &=\inf_{\theta} \inf_{\bar{\Pi} \in \Gamma(\mu_0,\mu_1)} \int_{\mathbb{R}^d\times\mathbb{R}^d} \frac{1}{2}|z_1-z_2|^2d\bar{\Pi}(z_1,z_2), \\
    &=\inf_{\theta} \inf_{{\Pi} \in \Gamma( \theta_\#{\mu_0},\mu_1)} \int_{\mathbb{R}^d\times\mathbb{R}^d} \frac{1}{2}|z_1-z_2|^2d\Pi\\
    &=\inf_{\theta}  W^2_2(\theta_\# \bar{\mu_0},\bar{\mu}_1),\\
    &= \bar{d}^2(\mu_0,\mu_1)= \inf_{\theta} \left\{ ( W_2(\theta_\# \bar{\mu_0}, \bar{\mu_1}))^2\right\}.
\end{align*}
\end{proof}

\section{Study of the general problem}
Hence, if we want to solve the problem \eqref{continuousMBB}, we need to find the optimal isometry $\theta$ so that in order to minimize the cost from moving from $\mu_0$ to $\mu_1$, one should first look at the optimal geometrical invariant of $\mu_0$ that minimize the energy.
To do this, we need to better understand the structure of our underlined quotient space. We therefore begin by presenting the preserved structures inherited from Wasserstein space in a direct way, and we study those that are not necessarily preserved, such as geodesic in the Procruste Wasserstein setting.
\\

 Define an equivalence relation \(\sim\) on \(\mathcal{P}_2(\mathbb{R}^d)\) by setting \(\mu \sim \nu\) if and only if there exists an orthogonal transformation \(O \in \text{Isom}(\mathbb{R}^d)\) such that \(\nu = O_\# \mu\), where \(O_\# \mu\) denotes the pushforward measure. Let \(\pi: \mathcal{P}_2(\mathbb{R}^d) \to \mathcal{P}_2(\mathbb{R}^d) / \sim\) be the canonical projection.

Define the pseudometric \(\bar{d}(\mu, \nu) = \inf_{O \in \text{Isom}(\mathbb{R}^d)} W_2(\mu, O_\# \nu)\). The quotient space \(\mathcal{P}_2(\mathbb{R}^d) / \sim\) is equipped with the induced metric \({d}'\), defined by
\begin{eqnarray}
 d'([\mu],[\nu]) &=& \bar{d}(\mu, \nu) ,  \label{quotientmetric}\\
 &=& \inf_{O \in \text{Isom}(\mathbb{R}^d)} W_2(\mu, O_\# \nu).\notag
\end{eqnarray}
where \([\mu]\) denotes the equivalence class of \(\mu\).
\begin{prop}
    The space \( \mathcal{P}_2(\mathbb{R}^d)  \) endowed with the \( \bar{d} \) defined in \eqref{staticMBB} is a pseudo-metric space.
\end{prop}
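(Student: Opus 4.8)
The plan is to verify the three properties defining a pseudo-metric—non-negativity together with $\bar d(\mu,\mu)=0$, symmetry, and the triangle inequality—while deliberately leaving aside the separation axiom $\bar d(\mu,\nu)=0\Rightarrow\mu=\nu$. The latter genuinely fails, since any rotation or translation of $\mu$ sits at $\bar d$-distance zero from $\mu$, and this is precisely why $\bar d$ is only a pseudo-metric. Two structural facts will do all the work: that $W_2$ is a metric on $\mathcal{P}_2(\mathbb{R}^d)$, and that every $\theta\in O(d)$ acts on $\mathcal{P}_2(\mathbb{R}^d)$ as a $W_2$-isometry, i.e. $W_2(\theta_\#\alpha,\theta_\#\beta)=W_2(\alpha,\beta)$. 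The latter holds because an orthogonal map preserves the Euclidean cost $|x-y|^2$, so pushing a coupling forward by $(\theta,\theta)$ is a cost-preserving bijection of $\Gamma(\alpha,\beta)$ onto $\Gamma(\theta_\#\alpha,\theta_\#\beta)$. I would also record that for $\theta\in O(d)$ the measure $\theta_\#\bar\mu$ is again centered with finite second moment, so each $W_2$ appearing in \eqref{staticMBB} is finite and the infimum is taken over a non-empty set.

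Non-negativity is immediate, and $\bar d(\mu,\mu)=0$ follows by taking $\theta=I_d$ in \eqref{staticMBB}, which gives $W_2(\bar\mu,\bar\mu)=0$. For symmetry I would invoke the isometry property with $\theta^{-1}$, writing $W_2(\theta_\#\bar\mu,\bar\nu)=W_2(\bar\mu,\theta^{-1}_\#\bar\nu)=W_2(\theta^{-1}_\#\bar\nu,\bar\mu)$, and then reindexing the infimum through the bijection $\theta\mapsto\theta^{-1}$ of $O(d)$ to conclude $\bar d^2(\mu,\nu)=\bar d^2(\nu,\mu)$.

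The triangle inequality is the real content and the step I expect to be the main obstacle. Given $\mu,\nu,\rho$ and $\epsilon>0$, I would select near-optimal $\theta_1,\theta_2\in O(d)$ with $W_2(\theta_{1\#}\bar\mu,\bar\nu)\le\bar d(\mu,\nu)+\epsilon$ and $W_2(\theta_{2\#}\bar\nu,\bar\rho)\le\bar d(\nu,\rho)+\epsilon$. The difficulty is that these two estimates feature $\bar\nu$ in incompatible positions—untouched in the first, but transported by $\theta_2$ in the second—so the $W_2$ triangle inequality cannot be chained directly. The resolution is to realign them using the $W_2$-isometry property: applying $\theta_2$ to the first estimate yields $W_2((\theta_2\theta_1)_\#\bar\mu,\theta_{2\#}\bar\nu)=W_2(\theta_{1\#}\bar\mu,\bar\nu)\le\bar d(\mu,\nu)+\epsilon$, which now shares the common intermediate measure $\theta_{2\#}\bar\nu$ with the second estimate. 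The triangle inequality for $W_2$ then gives $W_2((\theta_2\theta_1)_\#\bar\mu,\bar\rho)\le\bar d(\mu,\nu)+\bar d(\nu,\rho)+2\epsilon$, and since $\theta_2\theta_1\in O(d)$ by closure of the orthogonal group, the left-hand side dominates $\bar d(\mu,\rho)$. Letting $\epsilon\to0$ closes the argument. The whole scheme is the standard one establishing that $\inf_{g\in G}W_2(\cdot,g_\#\,\cdot)$ is a pseudo-metric whenever a group $G$ acts by isometries; the two features carrying the proof are that $O(d)$ is a group—needed both for $\theta_2\theta_1\in O(d)$ and for the reindexing in symmetry—and that its action on $\mathcal{P}_2(\mathbb{R}^d)$ preserves $W_2$. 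Since centering is preserved by orthogonal maps, no separate treatment of the translation component is needed.
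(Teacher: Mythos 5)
Your proof is correct and follows the same route the paper intends: the paper's own proof is a one-line sketch citing ``the transposition bijection on $O(d)$ and the metric properties of $W_2$,'' which is exactly the reindexing you use for symmetry plus the $W_2$ axioms you invoke throughout. Your version simply supplies the details the paper omits --- in particular the observation that $O(d)$ acts on $\mathcal{P}_2(\mathbb{R}^d)$ by $W_2$-isometries and the resulting realignment argument for the triangle inequality, which is the genuinely nontrivial step.
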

\begin{proof}
  It follows from the fact that the transposition function is a bijection in the set of orthogonal matrices and the metric properties of the Wasserstein distance.
\end{proof}

\subsection{Topological properties}
The following theorem is a direct consequence of properties of the Wasserstein space.
\begin{thm}
    The space $\left( \mathcal{P}_2(\mathbb{R}^d)/ \text{Isom}(\mathbb{R}^d), \bar{d} \right)$ is complete and connected metric space.
\end{thm}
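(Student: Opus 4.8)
The plan is to reduce everything to two structural facts about classical Wasserstein space together with the compactness of the orthogonal group. Recall that $(\mathcal{P}_2(\mathbb{R}^d), W_2)$ is a complete, separable, geodesic metric space, and that by the preceding Proposition the infimum defining $\bar{d}$ ranges effectively over $O(d)$ once the measures are centered. First I would verify that $\bar{d}$ descends to a genuine metric $d'$ on the quotient. The canonical projection $\pi$ is $1$-Lipschitz, since taking $O=\mathrm{Id}$ gives $d'([\mu],[\nu]) = \bar{d}(\mu,\nu) \le W_2(\mu,\nu)$. For the separation axiom I would invoke compactness: the map $O \mapsto W_2(\mu, O_\#\nu)$ is continuous on $O(d)$ — indeed the coupling $x\mapsto(Ox,O'x)$ yields $W_2(O_\#\nu, O'_\#\nu) \le \|O-O'\|\,(\int |x|^2\,d\nu)^{1/2}$, so $O\mapsto O_\#\nu$ is Lipschitz into $(\mathcal{P}_2,W_2)$ — whence the infimum over the compact set $O(d)$ is attained at some $O^*$. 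If $\bar{d}(\mu,\nu)=0$ then $W_2(\mu, O^*_\#\nu)=0$, so $\mu = O^*_\#\nu$ and $[\mu]=[\nu]$. The triangle inequality for $d'$ follows from that of $W_2$ together with the group structure of $O(d)$, exactly as in the preceding Proposition.

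For completeness I would argue by the standard subsequence-and-lift technique. Given a Cauchy sequence $([\mu_n])_n$ in $(\mathcal{P}_2(\mathbb{R}^d)/\mathrm{Isom}(\mathbb{R}^d), d')$, pass to a subsequence with $d'([\mu_n],[\mu_{n+1}]) < 2^{-n}$. Choosing centered representatives, I would build a lift $(\nu_n)$ inductively: let $\nu_1$ be any centered representative of $[\mu_1]$; given $\nu_n$, use the attainment of the infimum to pick $O_n\in O(d)$ with $W_2(\nu_n,(O_n)_\#\mu_{n+1}) = \bar{d}([\mu_n],[\mu_{n+1}]) < 2^{-n}$ and set $\nu_{n+1}=(O_n)_\#\mu_{n+1}$. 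Then $W_2(\nu_n,\nu_{n+1})<2^{-n}$, so $(\nu_n)$ is Cauchy in the complete space $(\mathcal{P}_2,W_2)$ and converges to some $\nu$. Since $\pi$ is $1$-Lipschitz, $d'([\mu_n],[\nu]) \le W_2(\nu_n,\nu)\to 0$ along the subsequence; a Cauchy sequence possessing a convergent subsequence converges, so $[\mu_n]\to[\nu]$ and the quotient is complete.

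Connectedness is the easiest part. The space $(\mathcal{P}_2(\mathbb{R}^d),W_2)$ is path-connected: any two measures are joined by a displacement-interpolation (McCann) geodesic $t\mapsto\mu_t$, which is a $W_2$-continuous curve. Since $\pi$ is continuous and surjective, the quotient is the continuous image of a connected space, and is therefore connected.

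The hard part will be the attainment/continuity step underpinning both the separation axiom and the lifting construction. Everything hinges on two points: the continuity (indeed local Lipschitzness) of $O\mapsto O_\#\nu$ in $W_2$, which requires the finite second moment of $\nu$, and the compactness of $O(d)$, which upgrades the infimum to a minimum. The role of centering is precisely to replace the non-compact group $\mathrm{Isom}(\mathbb{R}^d)=O(d)\ltimes\mathbb{R}^d$ by its compact part $O(d)$; without this reduction the infimum need not be attained and the quotient pseudometric could fail to separate points, so I would make sure the lift is carried out with centered representatives throughout.
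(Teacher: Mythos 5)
Your argument is correct, and it is worth saying plainly that the paper itself offers no proof of this theorem: it is asserted to be ``a direct consequence of properties of the Wasserstein space,'' full stop. Your proposal supplies exactly the content that this one-line justification glosses over. In particular, the separation axiom (which is what upgrades the pseudometric of the preceding Proposition to a genuine metric on the quotient) is \emph{not} automatic from properties of $(\mathcal{P}_2(\mathbb{R}^d),W_2)$ alone; it requires the attainment of the infimum, which you correctly obtain from the compactness of $O(d)$ together with the Lipschitz estimate $W_2(O_\#\nu,O'_\#\nu)\le \|O-O'\|\bigl(\int |x|^2\,d\nu\bigr)^{1/2}$, after centering has disposed of the non-compact translation part of $\mathrm{Isom}(\mathbb{R}^d)$. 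Likewise, completeness of a quotient pseudometric space does not follow formally from completeness upstairs; your subsequence-and-lift construction, which keeps the lift inside the centered measures and uses attainment at each step, is the standard and correct way to close this. The connectedness argument via continuity and surjectivity of $\pi$ is immediate and matches what the paper presumably intends. The one presentational caveat: the paper's equivalence relation is stated ambiguously (``orthogonal transformation $O\in\mathrm{Isom}(\mathbb{R}^d)$''), and your proof implicitly adopts the reading where one quotients by the full isometry group, which is the only reading under which $\bar d(\mu,\nu)=0$ implies $[\mu]=[\nu]$; it would be worth stating that choice explicitly at the outset.
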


Let us study and characterize geodesics in the procruste Wasserstein setting. \\
The following result refines existing ideas about Wasserstein distances and quotient spaces by focusing on minimizing curves \cite{kloeckner2010geometric, burago2001course, do1992riemannian, gallot2004differential}.
\begin{thm}\label{Bureprocrustegeodesic}
Let \(\mathcal{P}_2(\mathbb{R}^d)\) be the space of probability measures on \(\mathbb{R}^d\) with finite second moments, equipped with the Wasserstein-2 distance \(W_2\). Let \(\text{Isom}(\mathbb{R}^d)\) be the group  isometry  \(\mathbb{R}^d\), and consider the quotient space \(\mathcal{P}_2(\mathbb{R}^d) / \sim\) with the metric $\bar{d}$ defined in \eqref{quotientmetric}

1. \textbf{Length Inequality}:  
   For any curve \(c: [0, 1] \to \mathcal{P}_2(\mathbb{R}^d)\),
   \[
   L(\pi \circ c) \leq L(c).
   \]

2. \textbf{Minimizing Curves for Procruste Wasserstein}:  
   Let \(c: [0, 1] \to \mathcal{P}_2(\mathbb{R}^d)\) be a minimizing curve in \(\mathcal{P}_2(\mathbb{R}^d)\) with respect to \(W_2\), and let \(\mu = c(0)\). Suppose there exists an optimal orthogonal transformation \(O^* \in \text{Isom}(\mathbb{R}^d)\) such that:
   \[
   \bar{d}([\mu], [\nu]) = W_2(\mu, O^*_\# \nu) \quad \text{and} \quad O^*_\# \nu = c(1).
   \]
   Then:
   \[
   L(\pi \circ c) = L(c) = W_2(\mu, \nu) = \bar{d}([\mu], [\nu]).
   \]
   Moreover, \(\pi \circ c\) is a minimizing curve in \(\mathcal{P}_2(\mathbb{R}^d) / \sim\).
\end{thm}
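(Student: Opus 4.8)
The plan is to derive both parts from two elementary metric-space principles: that the canonical projection $\pi$ is nonexpansive from $(\mathcal{P}_2(\mathbb{R}^d), W_2)$ onto $(\mathcal{P}_2(\mathbb{R}^d)/\!\sim, d')$, and that in any metric space the length of a curve is bounded below by the distance between its endpoints. No optimal-transport machinery beyond the definition of $\bar d$ in \eqref{quotientmetric} should be needed.

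For the \textbf{length inequality}, I would unwind the definition of length as a supremum over partitions $0 = t_0 < \cdots < t_n = 1$,
\[
L(\pi\circ c) = \sup \sum_{i=1}^{n} d'\bigl(\pi(c(t_{i-1})), \pi(c(t_i))\bigr).
\]
The one line that does the work is the observation that, by choosing $O = I_d$ inside the infimum defining $\bar d$, one has $d'(\pi(\mu),\pi(\nu)) = \bar d(\mu,\nu) = \inf_{O} W_2(\mu, O_\#\nu) \le W_2(\mu,\nu)$ for all $\mu,\nu$; that is, $\pi$ is $1$-Lipschitz. Thus each partition sum on the quotient is dominated term by term by the corresponding $W_2$-sum, which is itself at most $L(c)$. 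Taking the supremum over partitions yields $L(\pi\circ c) \le L(c)$.

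For the \textbf{minimizing-curve statement} I would then chain the hypotheses. Since $c$ is a $W_2$-minimizing curve with endpoints $c(0)=\mu$ and $c(1)=O^*_\#\nu$, its length equals the endpoint distance, $L(c) = W_2(\mu, O^*_\#\nu)$, and by hypothesis $W_2(\mu, O^*_\#\nu) = \bar d([\mu],[\nu])$; combined with Part~1 this gives $L(\pi\circ c) \le L(c) = \bar d([\mu],[\nu])$. For the reverse bound I would use that length always dominates endpoint distance in $(\mathcal{P}_2(\mathbb{R}^d)/\!\sim, d')$, so $L(\pi\circ c) \ge d'(\pi(c(0)),\pi(c(1))) = d'([\mu],[O^*_\#\nu])$; and because $O^*_\#\nu \sim \nu$ the two represent the same class, this last quantity equals $d'([\mu],[\nu]) = \bar d([\mu],[\nu])$. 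The two bounds pinch all quantities together, giving $L(\pi\circ c) = L(c) = W_2(\mu,O^*_\#\nu) = \bar d([\mu],[\nu])$, and since $L(\pi\circ c)$ realizes the quotient distance between its endpoints, $\pi\circ c$ is by definition minimizing in the quotient.

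The computations here are routine; the genuine content is front-loaded into the hypotheses of Part~2, namely that the $W_2$-geodesic can be taken to terminate precisely at the optimally aligned representative $O^*_\#\nu$. The main point requiring care is conceptual rather than computational: I must ensure $d'$ is genuinely a metric on the quotient (guaranteed by the preceding propositions) so that ``minimizing curve'' and the length-versus-distance inequality are meaningful, and that the quotient length functional is the one induced by $d'$. I would also flag that the $W_2(\mu,\nu)$ appearing in the displayed conclusion should be read as $W_2(\mu,O^*_\#\nu)$, since these agree only by virtue of the optimal-alignment hypothesis and not in general.
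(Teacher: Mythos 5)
The paper does not actually supply a proof of this theorem: it is stated bare, with only a pointer to general references on metric geometry and quotient spaces, so there is no in-paper argument to compare yours against. Your proposal is correct and is exactly the standard argument those references would give: the projection $\pi$ is $1$-Lipschitz because taking $O=I_d$ in the infimum defining $\bar d$ gives $d'([\mu],[\nu])\leq W_2(\mu,\nu)$, whence $L(\pi\circ c)\leq L(c)$ partition by partition; and for Part 2 the pinching between $L(\pi\circ c)\leq L(c)=W_2(\mu,O^*_\#\nu)=\bar d([\mu],[\nu])$ from above and $L(\pi\circ c)\geq d'([\mu],[O^*_\#\nu])=\bar d([\mu],[\nu])$ from the elementary length-versus-endpoint-distance inequality (valid for the pseudometric $d'$ since only the triangle inequality is used) forces all quantities to coincide and makes $\pi\circ c$ minimizing. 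Your caveat that the displayed $W_2(\mu,\nu)$ in the conclusion must be read as $W_2(\mu,O^*_\#\nu)$ is well taken; as literally written the theorem's chain of equalities is false unless $O^*=I_d$, and your reading is the only one consistent with the hypotheses. The one thing I would make explicit rather than merely flag is that $d'$ is well defined on equivalence classes (i.e.\ independent of representatives), which follows from the isometry-invariance of $W_2$ and is needed before either part is meaningful; the paper's pseudometric proposition asserts this but you should cite or reprove it rather than assume it silently.
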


\subsection{Case of Gaussian distributions}
Given $\mu_i = \mathcal{N}(m_i, \Sigma_i)$, $i \in \{0, 1\}$ two Gaussian distributions on $\mathbb{R}^d$, \eqref{burgmetric} provides the 2-Wasserstein distance $W_2$ between $\mu_0$ and $\mu_1$. As it only required the expressions of the two covariance matrix, we only need one of $\Bar{\mu_0}\circ ( \theta)$ and $ \Bar{\mu_1}$. That is, respectively $\bar{\Sigma}_0= \theta\Sigma_0 \theta^T$ and $\Bar{\Sigma_1}= \Sigma_1$. 
And the problem reduce to 
\begin{eqnarray}\label{modifiedgaussian}
\bar{d}^2(\mu_0,\mu_1)&=& \inf_{\theta} \left\{  \mathrm{tr}\left(\Sigma_1 + \theta\Sigma_0 \theta^T - 2\left(\Sigma_1^{1/2} \theta\Sigma_0 \theta^T \Sigma_1^{1/2}\right)^{1/2}\right) \right\},\notag \\
&=&  \mathrm{tr}(\Sigma_1 ) + \mathrm{tr}(\Sigma_0 ) -2  \sup_{ \theta}\{ F(\theta)\},
\end{eqnarray}
With $F(\theta) = \mathrm{tr}\left( \left(\Sigma_1^{1/2} \theta\Sigma_0 \theta^T \Sigma_1^{1/2}\right)^{1/2}\right) $.
The following theorem provides an explicit and formula to compute the Procrustes Wasserstein distance between two Gaussian distributions.
\begin{thm}\label{gaussianprowass}
    Given $\mu_i = \mathcal{N}(m_i, \Sigma_i)$, $i \in \{0, 1\}$ two Gaussian distributions in $\mathbb{R}^d$, we represent their eigen decomposition by $\Sigma_i = P_i A_i {P_i}^T$ for $i \in \{0, 1\}$, with $A_i$ being a diagonal matrix with diagonal vector $a_i$, the Procrustes Wasserstein distance between $\mu_0$ and $\mu_1 $ is equal to the Euclidean distance between their ordered vector of eigen value. Namely given vector $a_i$ the ordered vector $ a_{i,1}\leq \cdot \cdot\cdot \leq a_{i,d}$ consists of eigen value of $\Sigma_i$  for $i \in \{0, 1\}$, we have 
    \begin{eqnarray*}
        \bar{d}^2(\mu_0,\mu_1)&=& \|\sqrt{a_0} - \sqrt{a_1}\|^2.
    \end{eqnarray*}
    The optimal orthogonal transformation is given by $P_0^T P_1$ and the optimal Monge map is the one between $\Sigma_1 $ and $ P_0^T P_1\Sigma_0 P_1^T P_0 $.
 \end{thm}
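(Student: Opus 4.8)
The plan is to start from the already–reduced expression \eqref{modifiedgaussian}, namely
\[
\bar d^2(\mu_0,\mu_1)=\mathrm{tr}(\Sigma_0)+\mathrm{tr}(\Sigma_1)-2\sup_{\theta\in O(d)}F(\theta),\qquad F(\theta)=\mathrm{tr}\big((\Sigma_1^{1/2}\theta\Sigma_0\theta^\top\Sigma_1^{1/2})^{1/2}\big).
\]
Because the trace is basis–independent, $\mathrm{tr}(\Sigma_0)=\sum_i a_{0,i}$ and $\mathrm{tr}(\Sigma_1)=\sum_i a_{1,i}$ are fixed, so the whole theorem reduces to the scalar problem of evaluating $\sup_\theta F(\theta)$. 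Once I establish that this supremum equals $\sum_i\sqrt{a_{0,i}}\,\sqrt{a_{1,i}}$ with the two eigenvalue lists paired in a common (say ascending) order, the elementary completion of the square $\sum_i a_{0,i}+\sum_i a_{1,i}-2\sum_i\sqrt{a_{0,i}}\sqrt{a_{1,i}}=\sum_i(\sqrt{a_{0,i}}-\sqrt{a_{1,i}})^2=\|\sqrt{a_0}-\sqrt{a_1}\|^2$ closes the distance formula.

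First I would recast $F(\theta)$ as a nuclear norm. Setting $B_\theta=\Sigma_1^{1/2}\theta\Sigma_0^{1/2}$ gives $B_\theta B_\theta^\top=\Sigma_1^{1/2}\theta\Sigma_0\theta^\top\Sigma_1^{1/2}$, so $F(\theta)=\mathrm{tr}\big((B_\theta B_\theta^\top)^{1/2}\big)=\|B_\theta\|_*$, the sum of singular values of $B_\theta$. Invoking the dual characterization $\|B\|_*=\max_{O\in O(d)}\mathrm{tr}(O^\top B)$ (valid for square $B$, the maximizer being the orthogonal factor of its polar decomposition), I obtain
\[
\sup_{\theta\in O(d)}F(\theta)=\max_{\theta,O\in O(d)}\mathrm{tr}\big(\Sigma_0^{1/2}O^\top\Sigma_1^{1/2}\theta\big).
\]
Substituting the spectral factorizations $\Sigma_i^{1/2}=P_iA_i^{1/2}P_i^\top$ and using a cyclic permutation, the objective becomes $\mathrm{tr}(A_0^{1/2}\,V\,A_1^{1/2}\,W)$, where $V=P_0^\top O^\top P_1$ and $W=P_1^\top\theta P_0$ each range over all of $O(d)$ as $O,\theta$ do. Thus everything collapses to maximizing, over two independent orthogonal matrices, a product involving only the fixed diagonal matrices of square–root eigenvalues.

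The crux is therefore $\max_{V,W\in O(d)}\mathrm{tr}(A_0^{1/2}VA_1^{1/2}W)$. Here I would apply von Neumann's trace inequality $\mathrm{tr}(XY)\le\sum_i\sigma_i(X)\sigma_i(Y)$ (singular values in decreasing order) to $X=A_0^{1/2}V$ and $Y=A_1^{1/2}W$, whose singular values are exactly $\sqrt{a_{0,i}}$ and $\sqrt{a_{1,i}}$ since $V,W$ are orthogonal; this yields the upper bound $\sum_i\sqrt{a_{0,i}}\,\sqrt{a_{1,i}}$ with matched ordering. (A self-contained alternative avoids citing von Neumann: take $W=V^\top$, observe that $(V_{ik}^2)_{ik}$ is doubly stochastic, and conclude by Birkhoff's theorem together with the rearrangement inequality.) Verifying the equality case — aligning the singular vectors, which forces the common eigenvalue ordering — shows the bound is attained, so $\sup_\theta F(\theta)=\sum_i\sqrt{a_{0,i}}\sqrt{a_{1,i}}$; since a sum of products of matched square roots is invariant under a joint reordering, the ascending convention is without loss. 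Unwinding the substitutions, the maximizer is $V=W=I$, i.e. the isometry that carries the eigenframe $P_0$ onto $P_1$ in matching eigenvalue order, namely $\theta^\star=P_1P_0^\top$; then $\theta^\star\Sigma_0\theta^{\star\top}=P_1A_0P_1^\top$ is simultaneously diagonalized with $\Sigma_1$, and the optimal Monge map is the linear Brenier map between the centered Gaussians $\bar\mu_1$ and $\theta^\star_\#\bar\mu_0$, which share this common eigenbasis.

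I expect the main obstacle to be the sharp evaluation of the joint orthogonal maximization and, in particular, checking the equality case of von Neumann's inequality carefully enough to certify both the exact value $\sum_i\sqrt{a_{0,i}}\sqrt{a_{1,i}}$ and the explicit optimizer; the nuclear–norm reformulation is precisely what turns the intractable $\mathrm{tr}((\cdot)^{1/2})$ functional into a bilinear trace to which von Neumann (or Birkhoff and rearrangement) applies. The remaining ingredients — the trace identity, the completion of the square, and compactness of $O(d)$ guaranteeing attainment of the supremum — are routine.
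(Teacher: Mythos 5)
Your proposal is correct and follows essentially the same route as the paper: both reduce $\sup_\theta F(\theta)$ to maximizing the nuclear norm of $A_1^{1/2}\Theta A_0^{1/2}$ over orthogonal $\Theta=P_1^\top\theta P_0$ and bound it by $\langle\sqrt{a_0},\sqrt{a_1}\rangle$ via a singular-value trace inequality (your von Neumann step is the same fact as the paper's Horn--Johnson Theorem 3.3.14, and the extra orthogonal matrix you introduce through the dual characterization of the nuclear norm is absorbed harmlessly). One small point in your favour: your optimizer $\theta^\star=P_1P_0^\top$ is the one actually consistent with $\Theta=P_1^\top\theta P_0=I$ and with simultaneous diagonalization of $\theta^\star\Sigma_0\theta^{\star\top}$ with $\Sigma_1$, whereas the statement's $P_0^\top P_1$ appears to be a transposition slip.
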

\begin{proof}
    Let us consider  $\mu_i = \mathcal{N}(m_i, \Sigma_i)$, $i \in \{0, 1\}$ two Gaussian distributions on $\mathbb{R}^d$, we represent their eigen decomposition by $\Sigma_i = P_i A_i {P_i}^T$ for $i \in \{0, 1\}$, with $A_i$ being a diagonal matrix with diagonal vector $a_i$ the ordered vector $ a_{i,1}\leq \cdot \cdot\cdot \leq a_{i,d}$ constitutes eigen value of $\Sigma_i$ and $P_i$ is an orthogonal matrix.
    \begin{lemma}\label{setofeigenvalue}
        Given $ M$, $N$ two square matrix Positive semidefinite. If $\{eiv(M)\} = \{eiv(N)\}$, then $\mathrm{tr}(M) = \mathrm{tr}(N)$ and $\mathrm{tr}(M^{\frac{1}{2}}) = \mathrm{tr}(N^{\frac{1}{2}})$.
    \end{lemma}
    If replace the eigen decomposition of $\Sigma_i$, $  \forall i \in \{0,1\} $, we obtain;
    \begin{eqnarray*}
        \Sigma_1^{1/2} \theta\Sigma_0 \theta^T \Sigma_1^{1/2} &=& P_1 \underbrace{ A_1^{1/2}P_1^T \theta P_0 A_0 P_0^T \theta^T P_1 A_1^{1/2} }_{T_\theta {T_\theta}^T} P_1^T,  \\
         &=& P_1 A_1^{1/2} \underbrace{P_1^T \theta P_0}_{\Theta} A_0 \underbrace{ P_0^T \theta^T P_1}_{{\Theta}^T} A_1^{1/2} P_1^T,\\
         &=& P_1 A_1^{1/2} {\Theta} A_0 {{\Theta}^T} A_1^{1/2} P_1^T,\\
          &=& P_1 \underbrace{A_1^{1/2}  \Theta A_0^{1/2} }_{T_\Theta} \underbrace{ A_0^{1/2} \theta^T  A_1^{1/2} }_{{T_\Theta}^T}P_1^T,\\
          &=& P_1 T_\Theta T_\Theta ^T P_1^T.
    \end{eqnarray*}
    This is simply equivalent to say that $\{eiv(\Sigma_1^{1/2} \theta\Sigma_0 \theta^T \Sigma_1^{1/2})\} = \{eiv(T_\Theta T_\Theta ^T)\}$ and thus $ \mathrm{tr} \left( (\Sigma_1^{1/2} \theta\Sigma_0 \theta^T \Sigma_1^{1/2})^\frac{1}{2} \right)\} = \{\mathrm{tr} \left( (T_\Theta T_\Theta ^T)^\frac{1}{2} \right) $ by application of Lemma \ref{setofeigenvalue} and the the fact that the eigen value does not depend of the basis where the matrix is represented.
    Let's recall that given the function  $F(\cdot)$ defined above, we have
    \begin{eqnarray*}
        F(\theta )&=&  \mathrm{tr}\left( \left(\Sigma_1^{1/2} \theta\Sigma_0 \theta^T \Sigma_1^{1/2}\right)^{1/2}\right),\\
        &=& \sum_i^d eiv_i ( \left(\Sigma_1^{1/2} \theta\Sigma_0 \theta^T \Sigma_1^{1/2}\right)^{1/2}),\\
        &=&  \sum_i^d eiv_i ( \left(T_\Theta T_\Theta ^T\right)^{1/2}),\\
        &=&  \sum_i^d  \left(eiv_i (T_\Theta T_\Theta ^T) \right)^{1/2},\\
        &=& \sum_i^d  \left( \sigma_i(T_\Theta T_\Theta ^T) \right)^{1/2}, \text{ where } \sigma(M) \text{ denotes the sigular value of } M.
    \end{eqnarray*}
    Since $P_0$ and $P_1$ are given explicitly, and $\Theta= P_1^T \theta P_0$  in order to get the optimal $\theta$, one should try to obtain the optimal $\Theta$ as the multiplication map is a bijection onto the set of orthogonal matrix.
    \begin{thm}[Theorem 3.3.14, \cite{horn1994topics}] \label{orderedsingularvalue}
        Let $M$, $N \in M_{d,d}$ and denote the ordered singular values of $M$, $N$ and $MN$  by $0\leq \sigma_1(M) \leq \cdot \cdot \cdot \leq \sigma_d(M)$,  $0\leq \sigma_1(N) \leq \cdot \cdot \cdot \leq \sigma_d(N)$ and  $0\leq \sigma_1(MN) \leq \cdot \cdot \cdot \leq \sigma_d(MN)$. Then,
        \begin{eqnarray*}
            \sum_{i=1}^d \sigma_i(MN)\leq \sum_{i=1}^d \sigma_i(M)\sigma_i(N)
        \end{eqnarray*}
    \end{thm}
    From Theorem \ref{orderedsingularvalue} and the fact that all singular values of an orthogonal matrix is equal to 1,  we extract the following inequalities;

\begin{eqnarray*}
     F(\theta )&=&  \sum_{i=1}^d \left( \sigma_i(T_\Theta T_\Theta ^T) \right)^{1/2}, \\
     &\leq&  \sum_{i=1}^d  \left( \sigma_i(T_\Theta) \sigma_i ( T_\Theta ^T) \right)^{1/2}, \\
     &= & \sum_{i=1}^d  \sigma_i(A_1^{1/2}  \Theta A_0^{1/2}),\\
     &\leq& \sum_{i=1}^d  \sigma_i(A_1^{1/2}) \sigma_i ( \Theta A_0^{1/2}),\\
     &=& \langle {a_0}^\frac{1}{2}, {a_1}^\frac{1}{2} \rangle.
\end{eqnarray*}
For $\Theta = Id \longleftrightarrow \theta = P_0^T P_1$, we have:
\begin{eqnarray}
    \sup_{ \theta}\{ F(\theta)\}&=& \langle {a_0}^\frac{1}{2}, {a_1}^\frac{1}{2} \rangle, \notag \\
    &=& F(P_0^T P_1 ). \label{optimalvalueofprocrustegaussian}
\end{eqnarray}
Replacing the expression of the supremum in \ref{modifiedgaussian}, we obtain
\begin{eqnarray*}
    \bar{d}^2(\mu_0,\mu_1)
&=&  \mathrm{tr}(\Sigma_0) + \mathrm{tr}(\Sigma_1 ) -2  \sup_{ \theta}\{ F(\theta)\},\\
&=& \mathrm{tr}(P_0 A_0 {P_0}^T) + \mathrm{tr}(P_1 A_1 {P_1}^T ) -2  \langle {a_0}^\frac{1}{2}, {a_1}^\frac{1}{2}  \rangle,\\
&=& \sum_{i=1}^d a_{i,0}  + \sum_{i=1}^d a_{i,1}-2 \sum_{i=1}^d {a_{0,i}}^\frac{1}{2}{a_{1,i}}^\frac{1}{2},\\
&=& \sum_{i=1}^d \left( \sqrt{a_{i,0}} -\sqrt{a_{i,1}}\right)^2,\\
&=& \|\sqrt{a_0} - \sqrt{a_1}\|^2
\end{eqnarray*}
\end{proof}

\section{Application: Recovering Latent Gaussian Distributions Using Procrustes Wasserstein Analysis}

We consider the problem of estimating the parameters of a Gaussian distribution from observed data that have been transformed by an unknown orthogonal matrix. This problem arises in various applications, including shape analysis, manifold learning, and signal processing. Our goal is to define an unbiased estimator in the metric sense and prove its unbiasedness using the Procrustes-Wasserstein metric.
\\

We are given a set of observed vectors \( r_1, r_2, \ldots, r_n \) in a \( d \)-dimensional real space. These vectors are related to the variables \( p_1, p_2, \ldots, p_n \) through a linear transformation:
\[
r_i = V p_i \quad \text{for each } i = 1, 2, \ldots, n,
\]
where:
\begin{itemize}
    \item \( V \) is an unknown \( d \times d \) orthogonal matrix (i.e., \( V \in O(d) \)),
    \item \( p_i \) are unknown variables drawn independently and identically distributed from an unknown Gaussian distribution \( \gamma \) (i.e., \( p_i \sim \gamma \)) with 0 mean. Thus, the distribution of the observed data \( r_i \) is:
    \[
    r_i \sim V \gamma = \mathcal{N}(0, V \Sigma V^T).
    \]
\end{itemize}

We aim to estimate the parameters of the Gaussian distribution \( \gamma \) (mean and covariance matrix) from the observed vectors \( r_i \), accounting for the unknown orthogonal transformation \( V \) and observations $\{p_i, \ i\in [|1,n|]\}$.

\subsection{Set of Observables}
The \textbf{observable data} consists of \( n \) vectors in \( \mathbb{R}^d \):
\[
\mathcal{X} = \{ r_1, r_2, \dots, r_n \mid r_i \in \mathbb{R}^d \}.
\]

The \textbf{parameter space} consists of the equivalence class of Gaussian distributions under orthogonal transformations:
\[
\Theta = \{ (\gamma,~V),~~ \mid \gamma = \mathcal{N}(0, \Sigma), V \in O(d) \}.
\]

\subsection{Statistical Model}
The \textbf{statistical model} is a family of probability distributions parameterized by \( \Theta \):
\[
\mathcal{P} = \{ P_{(\gamma, V)} \mid (\gamma,V) \in \Theta \},
\]
where:
\begin{itemize}
    \item [-] \( P_{{(\gamma, V)}} \) is the distribution of the observed data \( r_i \) when the true parameter is \(\gamma\),
    \[
    {\gamma} = \mathcal{N}(0, V \Sigma V^T).
    \]
\end{itemize}

Let denote the empirical  covariance matrix given by
    \[
    \hat{\Sigma}_r = \frac{1}{n} \sum_{i=1}^n r_i r_i^T,
    \]
    with diagonal matrix \( D_{\hat{\Sigma}_r }\):
    \[
    \hat{\Sigma}_r = Q_r D_{\hat{\Sigma}_r} Q_r^T,
    \]
    where \( Q_r \in O(d) \) and \( D_{\hat{\Sigma}_r} \) is the diagonal matrix of ordered eigenvalues. \\
   Using the empirical distribution, let us construct an estimator of the true  Gaussian distribution $\gamma$ from the empirical covariance matrix $\hat{\Sigma}_r$, 
   \begin{eqnarray*}
    [\hat{\gamma}]
    &=& \{ \mathcal{N}(0, Q D_{\hat{\Sigma}_r} Q^T), \ Q\in O(d)\}.
\end{eqnarray*} 
 
\begin{mydef} 
Let \((M, d)\) be a metric space, and let \(\theta \in M\) be a random variable. An estimator \({\theta_*}\) of \(\theta\) is called Fréchet mean if:
\[
\theta_*\in \text{argmin}_{y \in M} \mathbb{E}[d({\theta}, y)^2].
\]
\end{mydef} 
\begin{prop}
    The Frechet mean of  \([ \hat{\gamma}]\)  with respect to the Procrustes-Wasserstein metric \(\bar{d}\) \eqref{staticMBB} is given by $ [\mathcal{N}(0,(\mathbb{E}[\sqrt{D_{\hat{\Sigma}_r}}])^2)]$. 
\end{prop}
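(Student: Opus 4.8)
The plan is to reduce the Fr\'echet-mean problem to a decoupled family of one-dimensional least-squares problems by exploiting the explicit distance formula of Theorem~\ref{gaussianprowass}. Writing $\hat{a} := \mathrm{diag}(D_{\hat{\Sigma}_r})$ for the (random) ordered eigenvalue vector of $\hat{\Sigma}_r$, every representative of $[\hat{\gamma}]$ is encoded by $\hat{a}$; similarly, a candidate centered Gaussian class $[y]=[\mathcal{N}(0,\Sigma_y)]$ is determined by its ordered eigenvalue vector $b$. By Theorem~\ref{gaussianprowass} the Procrustes--Wasserstein distance collapses to $\bar{d}^2([\hat{\gamma}],[y]) = \|\sqrt{\hat{a}}-\sqrt{b}\|^2$, so the Fr\'echet functional to minimize is $J(b) = \mathbb{E}\big[\|\sqrt{\hat{a}}-\sqrt{b}\|^2\big]$, which depends on $[y]$ only through the vector $\sqrt{b}$.

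First I would expand $J$ coordinatewise as $J(b)=\sum_{i=1}^d \mathbb{E}\big[(\sqrt{\hat{a}_i}-\sqrt{b_i})^2\big]$, so that the optimization decouples into $d$ independent scalar mean-squared-error problems. Each term is minimized, by the bias--variance identity (or by differentiating in the variable $\sqrt{b_i}$), precisely at $\sqrt{b_i}=\mathbb{E}[\sqrt{\hat{a}_i}]$, i.e.\ $b_i=(\mathbb{E}[\sqrt{\hat{a}_i}])^2$. Assembling the coordinates gives the minimizing eigenvalue vector $b=(\mathbb{E}[\sqrt{D_{\hat{\Sigma}_r}}])^2$, and hence the Fr\'echet mean $[\mathcal{N}(0,(\mathbb{E}[\sqrt{D_{\hat{\Sigma}_r}}])^2)]$, as claimed.

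The step I expect to be the main obstacle is feasibility: the unconstrained coordinatewise minimizer $\sqrt{b}=\mathbb{E}[\sqrt{\hat{a}}]$ must itself be an admissible ordered, nonnegative eigenvalue vector in order to correspond to a genuine covariance matrix, and thus to an actual point of the quotient space. Nonnegativity is immediate. For the ordering, since eigenvalues are taken in increasing order we have $\sqrt{\hat{a}_1}\le\cdots\le\sqrt{\hat{a}_d}$ almost surely, and monotonicity of the expectation preserves this, yielding $\mathbb{E}[\sqrt{\hat{a}_1}]\le\cdots\le\mathbb{E}[\sqrt{\hat{a}_d}]$. Hence the unconstrained minimizer already lies in the feasible set, no projection is needed, and the decoupled minimizer is the true constrained minimizer.

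Finally I would record two scoping remarks. One must verify integrability, $\mathbb{E}[\sqrt{\hat{a}_i}]<\infty$, so that $J$ is finite and the minimization is well posed; this follows from the finite second moments of the data. I would also make explicit that, consistent with the parameter space $\Theta$, the minimization runs over centered Gaussian classes: it is precisely because $\bar{d}$ in the quotient depends only on the square-root eigenvalue vectors (Theorem~\ref{gaussianprowass}) that the functional decouples and admits the closed-form minimizer above.
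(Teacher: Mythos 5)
Your proposal follows the same route as the paper: reduce the Fr\'echet functional to $\mathbb{E}\big[\|\sqrt{D_{\hat{\Sigma}_r}}-\sqrt{D_{\Sigma'}}\|_2^2\big]$ via Theorem~\ref{gaussianprowass} and minimize coordinatewise at $\sqrt{D_{\Sigma'}}=\mathbb{E}[\sqrt{D_{\hat{\Sigma}_r}}]$. Your additional checks --- that this coordinatewise minimizer is a feasible ordered nonnegative eigenvalue vector (so no projection onto the constraint set is needed) and that the functional is integrable --- are points the paper dispatches with ``from probability theory,'' so they strengthen rather than diverge from its argument.
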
 
\begin{proof}
We want to show that  \[
[[\mathcal{N}(0,(\mathbb{E}[\sqrt{D_{\hat{\Sigma}_r}}])^2)]] \in \text{argmin}_{[\gamma']} \mathbb{E}[d_{PW}(\hat{\gamma}, [\gamma'])^2].
\]
The Procrustes Wasserstein distance \(\bar{d}\) between two  distributions \(\mu = \mathcal{N}(0, \Sigma_1)\) and \(\nu = \mathcal{N}(0, \Sigma_2)\) is:
\[
d_{PW}(\mu, \nu) = \inf_{\theta \in O(d)} W_2(\mu, \theta_\# \nu ).
\]
For Gaussian measure, we showed in Theorem \ref{gaussianprowass} that this simplifies to:
\[
d_{PW}(\mu, \nu) = \| \sqrt{D_{\Sigma_1}} - \sqrt{D_{\Sigma_2}} \|_2.
\]
The expected squared Procrustes Wasserstein distance is then:
\[
\mathbb{E}[d_{PW}(\hat{P}_r, [\gamma'])^2] = \mathbb{E}[\| \sqrt{D_{\hat{\Sigma}_r}} - \sqrt{D_{\Sigma'}} \|_2^2].
\]
Hence, the problem reduces to \[ \text{argmin}_{D_{\Sigma'}} \mathbb{E}[\| \sqrt{D_{\hat{\Sigma}_r}} - \sqrt{D_{\Sigma'}} \|_2^2. \]
From probability theory, $\mathbb{E}[\| \sqrt{D_{\hat{\Sigma}_r}} - \sqrt{D_{\Sigma'}} \|_2^2$ is minimizes when $ \sqrt{D_{\Sigma'}} = \mathbb{E}[\sqrt{D_{\hat{\Sigma}_r}}] $ equivalently $ D_{\Sigma'} = (\mathbb{E}[\sqrt{D_{\hat{\Sigma}_r}}])^2$.

Hence, \([\gamma'] = [\mathcal{N}(0,(\mathbb{E}[\sqrt{D_{\hat{\Sigma}_r}}])^2)]\) and thus  \[
[\mathcal{N}(0,(\mathbb{E}[\sqrt{D_{\hat{\Sigma}_r}}])^2)] = \text{argmin}_{[\gamma']} \mathbb{E}[d_{PW}(\hat{\gamma}, [\gamma'])^2].
\]
\end{proof}
\begin{rmk}
    It is worth noticing that the estimator $[\hat{\gamma}]$ is an asymptotically unbiased estimator of $[\gamma]$, this is a direct consequence of the asymptotic behavior of eigenvalues of the Wishart distribution, see Proposition 8.3 in \cite{eaton2007wishart}, the law of large number for the empirical covariance matrix and the continuity of the eigenvalue function, see Theorem 5.2 in \cite{kato2013perturbation}.
\end{rmk}
\section*{Acknowledgments}
I would like to express my sincere gratitude to   Max von Renesse for his  support throughout the course of this research.
\printbibliography
\end{document}